\newtheorem{theorem}{Theorem}
\title{Do GANs actually learn the distribution? An empirical study}
\author{Sanjeev Arora \and Yi Zhang}
\begin{document}
\maketitle
\begin{abstract}
Do GANS (Generative Adversarial Nets) actually learn the target distribution? The foundational paper of (Goodfellow et al 2014) suggested they do,  if they were given \textquotedblleft sufficiently large\textquotedblright\ deep nets, sample size, and computation time. A recent theoretical analysis in Arora et al (to appear at ICML 2017) raised doubts whether the same holds when discriminator has finite size. It showed that the training objective can approach its optimum value even if the generated distribution has very low support  ---in other words, the training objective is unable to prevent {\em mode collapse}.

The current note reports experiments suggesting that such problems are not merely theoretical.  It presents empirical evidence that well-known GANs approaches do learn distributions of fairly low support, and thus presumably are not learning the target distribution.  The main technical contribution is a new proposed test, based upon the famous {\em birthday paradox}, for estimating the support size of the generated distribution.

\end{abstract}

\section{Introduction}
From the earliest papers on Generative Adversarial Networks (GANs) the question has been raised whether or not they actually learn the distribution they are trained with (henceforth refered to as the target distribution)?  These methods train a generator deep net
%Given a sample from a distribution $\Dr$ (which could be, say, real-life images represented using raw pixels) a deep net is trained that 
that converts a random seed into a realistic-looking image. Concurrently they train a discriminator deep net to discriminate between its output and real images, which in turn is used to produce gradient feedback to improve the generator net.  In practice the generator deep net starts producing realistic outputs by the end, and the objective approaches its optimal value. But does this mean the deep net has learnt the target distribution of real images? Standard analysis introduced in~\citep{goodfellow2014generative} shows that  given \textquotedblleft sufficiently large\textquotedblright\ generator net, sample size, and computation time the training does succeed in learning the underlying distribution arbitrarily closely (measured in Jensen-Shannon divergence). But this does not settle the question of what happens with realistic sample sizes.  

Note that GANs differ from many previous methods for learning distributions in that they do not provide an estimate of some measure of distributional fit ---e.g., perplexity score. Therefore researchers have probed their performance using surrogate qualitative tests, which were usually designed to rule out the most obvious failure mode of the training, namely,  that the GAN has simply memorized the training data. One test checks the similarity of each generated image to the nearest images in the training set.  Another takes two random seeds $s_1, s_2$ that produced realistic images  and checks the images produced using seeds lying on the line joining $s_1, s_2$. If such \textquotedblleft interpolating\textquotedblright \ images are reasonable and original as well, then this may be taken as evidence that the generated distribution has many novel images. Yet other tests check for existence of semantically meaningful directions in the latent space, meaning that varying the seed along these directions leads to predictable changes e.g., (in case of images of human faces) changes in facial hair, or pose. A recent test proposed by~\cite{wu2016quantitative} checks the log-likelihoods of GANs using Annealed Importance Sampling, whose results indicate the mismatch between generator's distribution and the target distribution.~\cite{poole2016improved} proposed a method to trade-off between sample quality and sample diversity but they don't provide a clear definition or a quantitative metric of sample diversity. 

Recently a new theoretical analysis of GANs with finite sample sizes and finite discriminator size~\cite{arora2017generalization}  revealed the possibility that training may sometimes appear to succeed even if the generator is far from having actually learnt the distribution. Specifically, if the discriminator has size $n$, then the training objective could be $\epsilon$ from optimal even though the output distribution is supported on only $O(n\log n/\epsilon^2)$ images. By contrast one imagines that the target distribution usually must have very large support. For example, the set of all possible images of human faces (a frequent setting in GANs work) must involve all combinations  of hair color/style, facial features, complexion, expression, pose, lighting, race, etc., and thus the possible set of images of faces approaches infinity.   Thus the above paper raises the possibility that the discriminator may be unable to meaningfully distinguish such a diverse target distribution from a trained distribution
with fairly small support. Furthermore, the paper notes that this  failure mode is different from the one usually feared, namely. the generator memorizing training samples. The Arora et al. scenario could involve the trained distribution having small support, and yet all its samples could be completely disjoint from the training samples.

%GANs may not learn (or at least the training objective does not force them to learn) the full distribution in any meaningful sense. 

However, the above analysis was only a theoretical one, exhibiting a particular near-equilibrium solution that can happen from certain hyperparameter combinations. It left open the possibility that real-life GANs training avoids such solutions thanks to some not-as-yet-understood
 property of backpropagation or  hyperparameter choices.   Thus further experimental investigation is necessary. And yet it seems difficult at first sight to do such an empirical evaluation of the support size of a distribution: it is not humanly possible to go through hundreds of thousands of images, whereas automated tests of image similarity can be thrown off by small changes in lighting, pose etc.  

The current paper introduces a new empirical test for the support size of the trained distribution, and uses it to find that unfortunately these problems do arise in many well-regarded
GAN training methods.
 
\subsection{Birthday paradox test for support size}

Let's consider a simple test that approximately tests the support size of a {\em discrete} distribution. Suppose a distribution has support $N$. The famous {\em birthday paradox}\footnote{The following is the reason for this name. Suppose there are $k$ people in a room. How large must $k$ be before we have a high likelihood of having two people with the same birthday? Clearly, if we want $100\%$ probability, then $k > 366$ suffices. But assuming people's birthdays are iid draws from some distribution on $[1,366]$ it can be checked that the probability exceeds $50\%$ even when $k$ is as small as $23$.} says that a sample of size about $\sqrt{N}$ would be quite likely  to have a duplicate.  Thus our proposed birthday paradox test for GANs is as follows.

(a) Pick a sample of size $s$ from the generated distribution. (b) Use an automated measure of image similarity to flag the $20$ (say) most similar pairs in the sample. (c) Visually inspect the flagged pairs and check for duplicates. (d) Repeat.

If this test reveals that samples of size $s$ have duplicate images with good probability,  then suspect that the distribution has support size about $s^2$.  

Note that the test is not definitive, because the distribution could assign say a probability $10\%$ to a single image, and be uniform on a huge number of other images. Then the test would be quite likely to find a duplicate even with $20$ samples, even though the true support size is huge. But such nonuniformity (a lot of probability being assigned to a few images) is the only failure mode of the birthday paradox test calculation, and such nonuniformity would itself be considered a failure mode of GANs training. This is captured in the following theorem. 

%{\sc {\bf Make this precise.} If test fails for sample size $s$ with probability $>1/2$  then the distribution assigns probability $>1/2$ to $s^2$ images.}
\begin{theorem} Given a discrete probability distribution $P$ on a set $\Omega$, if there exists a subset $S\subseteq\Omega$ of size $N$ such that $\sum_{s\in S}P(s)\geq\rho$, then the probability of encountering at least one collision among $M$ i.i.d. samples from $P$ is $\geq 1-\exp(-\frac{m^2\rho}{2N})$

\end{theorem}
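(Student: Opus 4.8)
The plan is to bound the complementary event and show that $\Pbb[\text{no collision among the } M \text{ samples}]\le\exp(-M^2\rho/(2N))$, from which the theorem follows at once. First I would discard every collision that happens outside $S$: any collision in $S$ already witnesses the desired event, so it suffices to lower bound the probability of a collision among the samples that land in $S$, and restricting attention to $S$ only decreases the probability being estimated, so the resulting bound remains valid. Next I would identify the worst case, i.e.\ the distribution minimizing the collision probability subject to $\sum_{s\in S}P(s)\ge\rho$ and $|S|=N$. Intuitively this spreads the mass $\rho$ as evenly as possible across $S$ and pushes the remaining mass onto effectively fresh atoms that never collide; formally, by convexity of $t\mapsto t^2$ the map $P\mapsto\sum_{s\in S}P(s)^2$ is minimized at the uniform assignment $P(s)=\rho/N$, so it is enough to analyze this extremal distribution (and a smaller or finite $\Omega$ only adds extra collisions, which helps).

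For the extremal distribution I would reveal the $M$ samples one at a time and multiply conditional survival probabilities, $\Pbb[\text{no collision}]=\prod_{i=0}^{M-1}\Pbb[\text{sample } i{+}1 \text{ is new}\mid \text{first } i \text{ distinct}]$. Conditioned on $i$ distinct samples so far, a fresh draw collides exactly when it lands on one of the already-occupied atoms of $S$, an event whose probability equals the total mass sitting on those occupied cells. Bounding this occupied mass from below in terms of $i$, $\rho$, and the per-cell weight, and then feeding the per-step collision probability $c_i$ into the elementary inequality $1-x\le e^{-x}$, telescopes the product into $\exp\!\big(-\sum_{i=0}^{M-1}c_i\big)$; arranging the estimate so that $\sum_{i=0}^{M-1}c_i=M^2\rho/(2N)$ then yields exactly the exponent in the statement.

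The main obstacle is this middle step. The number of earlier samples that actually fall inside $S$, and hence the occupied mass that the next draw must avoid, is itself random, so the $c_i$ are not deterministic and the naive product needs justification --- for instance by conditioning on the in-$S$ counts, or by a stochastic-domination/coupling argument against the idealized uniform-on-$S$ draw. This is also precisely the point at which the exponent's dependence on $\rho$ and $N$ is pinned down, so care is required to land on the claimed $M^2\rho/(2N)$ rather than a weaker constant. Two routine loose ends remain: replacing the exact birthday product (which naturally carries $M(M-1)$) by the cleaner $M^2$ in the exponent, and confirming that the $m$ appearing in the displayed bound is the sample size $M$.
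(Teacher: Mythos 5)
Your plan reproduces the skeleton of the paper's own proof (pass to the complement, keep only collisions inside $S$, argue that uniform mass $\rho/N$ on $S$ is the worst case, then telescope survival factors through $1-x\le e^{-x}$), but the ``main obstacle'' you flag in the middle step is not a technicality to be handled with care---it is exactly the point at which the stated exponent cannot be obtained. Conditioning on the number $K$ of samples landing in $S$, you have $K\sim\mathrm{Bin}(M,q)$ with $q=\sum_{s\in S}P(s)$ (worst case $q=\rho$ with the remaining mass diffused over fresh atoms that never collide), and given $K=k$ the uniform-on-$S$ no-collision probability is $\prod_{i<k}(1-i/N)$. Since only about $i\rho$ of the first $i$ samples occupy cells of $S$, the occupied mass a new draw must avoid is about $i\rho^2/N$, not $i\rho/N$, and the honest telescoping yields an exponent of order $M^2\rho^2/(2N)$. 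Indeed the theorem as stated is false with $\rho$ in place of $\rho^2$: put mass $\rho=1/2$ uniformly on $S$ with $|S|=N$, spread the rest over a sea of distinct atoms, and take $M=100$, $N=10^4$; by Jensen the no-collision probability is at least roughly $\exp\bigl(-(M^2\rho^2+M\rho(1-\rho))/(2N)\bigr)\approx 0.88$, whereas the claimed bound would force it below $e^{-1/4}\approx 0.78$. So no stochastic-domination or conditioning argument will ``land on'' $M^2\rho/(2N)$; the $\rho$ versus $\rho^2$ discrepancy is irreducible.

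You should also know that the paper's proof commits precisely the error you were wary of: its product $1\cdot(1-\frac{\rho}{N})(1-\frac{2\rho}{N})\cdots(1-\frac{(M-1)\rho}{N})$ silently treats every one of the first $i$ samples as occupying a distinct $S$-cell of mass $\rho/N$ with certainty, when each lands in $S$ only with probability about $\rho$. Your attempt therefore stalls at the one step that is genuinely unsound, which is to your credit. The repair is to prove the statement with exponent $M^2\rho^2/(2N)$---harmless for the paper's intended use, where $\rho=1-o(1)$ so $\rho^2\approx\rho$: either carry out your conditioning on $K$ and use concentration of $\mathrm{Bin}(M,\rho)$, or, for a cruder but clean bound, split the $M$ samples into $\lfloor M/2\rfloor$ disjoint pairs, note that distinct pairs collide independently and each collides with probability $\sum_{s\in S}P(s)^2\ge\rho^2/N$ by Cauchy--Schwarz, giving a collision probability of at least $1-\exp(-\lfloor M/2\rfloor\rho^2/N)$. (Your observation that the $m$ in the displayed exponent must be the sample size $M$ is correct; that part is only a typo.)
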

\begin{proof}
\begin{align}
   &\Pr[\text{there is at least a collision among }M \text{samples}]\notag\\
   \geq& \Pr[\text{there is at least a collision among } M \text{samples}\wedge \text{the collision is within set }S]\notag\\
   &\geq 1 - 1\times(1-\frac{\rho}{N})\times(1-\frac{2\rho}{N})\times\cdots\times(1-\frac{(M-1)\rho}{N})\notag\\
   &\geq 1-\exp(-\frac{M^2\rho}{2N})
\end{align}
The last inequality assumes $M<<N$ and also uses the fact that the worst case is when the $\rho$ probability mass is uniformly distributed on $S$.
\end{proof}

\begin{theorem} Given a discrete probability distribution $P$ on a set $\Omega$, if the probability of encountering at least one collision among $M$ i.i.d. samples from $P$ is $\gamma$, then for $\rho=1-o(1)$, there exists a subset $S\subseteq\Omega$ such that $\sum_{s\in S}P(s)\geq\rho$ with size $\leq\frac{2M\rho^2}{(-3+\sqrt{9+\frac{24}{M}\ln\frac{1}{1-\gamma}})-2M(1-\rho)^2}$

\end{theorem}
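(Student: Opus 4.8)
The plan is to introduce the \emph{pairwise collision probability} $\kappa=\sum_{\omega\in\Omega}P(\omega)^2$ as the central quantity and to sandwich it between two bounds: a lower bound extracted from the observed collision frequency $\gamma$, and an upper bound coming from the hypothesized small mass-$\rho$ set $S$. Eliminating $\kappa$ between the two produces the stated inequality for $|S|$.

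First, for the lower bound on $\kappa$, I would argue exactly as in the preceding theorem: the worst case for \emph{avoiding} collisions is the (near-)uniform distribution, so the probability that $M$ i.i.d.\ draws are pairwise distinct is at most the uniform-case product, giving $1-\gamma=\Pr[\text{no collision}]\ge\prod_{j=1}^{M-1}(1-j\kappa)$. Taking logarithms and using the second-order expansion $-\ln(1-x)\approx x+\tfrac{x^2}{2}$, together with $\sum_{j=1}^{M-1}j=\binom{M}{2}\approx\tfrac{M^2}{2}$ and $\sum_{j=1}^{M-1}j^2\approx\tfrac{M^3}{3}$, yields the quadratic inequality $\ln\tfrac{1}{1-\gamma}\le\tfrac{M^2}{2}\kappa+\tfrac{M^3}{6}\kappa^2$. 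Solving this quadratic for $\kappa$ gives the clean bound $2M\kappa\ge\sqrt{9+\tfrac{24}{M}\ln\tfrac{1}{1-\gamma}}-3$; this is precisely where the constants $9$, $24$, and the outer $-3$ in the theorem originate.

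Second, for the upper bound on $\kappa$, I would split the sum over the putative set $S$ (size $N$, mass $\rho$) and its complement: $\kappa=\sum_{s\in S}P(s)^2+\sum_{s\notin S}P(s)^2$. The tail obeys $\sum_{s\notin S}P(s)^2\le\bigl(\sum_{s\notin S}P(s)\bigr)^2=(1-\rho)^2$, while taking $P$ uniform on $S$ (the canonical case for the birthday test) gives $\sum_{s\in S}P(s)^2=\rho^2/N$. Hence $2M\kappa\le\tfrac{2M\rho^2}{N}+2M(1-\rho)^2$. Chaining this against the lower bound yields $\sqrt{9+\tfrac{24}{M}\ln\tfrac{1}{1-\gamma}}-3\le\tfrac{2M\rho^2}{N}+2M(1-\rho)^2$, and isolating $N$ rearranges term-by-term into the claimed expression.

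The delicate step, which I expect to be the main obstacle, is the head bound $\sum_{s\in S}P(s)^2=\rho^2/N$: Cauchy--Schwarz only guarantees $\sum_{s\in S}P(s)^2\ge\rho^2/N$, so the required \emph{upper} bound genuinely needs (near-)uniformity of the mass on $S$. This is exactly the non-uniform failure mode discussed just before the theorem --- a single heavy atom forces many collisions (hence large $\gamma$ and large $\kappa$) while the true support remains enormous. The role of the tail term $(1-\rho)^2$ and of the hypothesis $\rho=1-o(1)$ is precisely to keep this correction negligible and the denominator positive; obtaining a genuinely distribution-free statement, without the uniformity assumption, is the part I expect to be hardest and may require either strengthening the hypothesis or restricting to distributions with no heavy atoms.
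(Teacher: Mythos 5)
Your proposal follows essentially the same route as the paper's proof: the paper works with $\beta=1/\sum_{x\in\Omega}P(x)^2$, which is exactly $1/\kappa$; its key inequality (the one labelled \eqref{ineq:beta}) is precisely your bound $2M\kappa\geq\sqrt{9+\frac{24}{M}\ln\frac{1}{1-\gamma}}-3$; and its final step is your head/tail decomposition, imposing $\frac{1}{(\rho/N)^2N+(1-\rho)^2}\leq\beta^*$ and solving for $N$. The only real difference is how the quadratic inequality $\ln\frac{1}{1-\gamma}\leq\frac{M^2}{2}\kappa+\frac{M^3}{6}\kappa^2$ is justified: you derive it from an extremal claim (at fixed $\kappa$, uniform is the worst case for avoiding collisions) plus a second-order expansion, whereas the paper cites Theorem 3 of Wiener (2005), which gives $\Pr[T\geq M]\geq\exp(-\frac{M^2}{2\beta}-\frac{M^3}{6\beta^2})$ rigorously under the side conditions $\beta>1000$ and $M\leq2\sqrt{\beta\ln\beta}$. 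Your heuristic is exactly the intuition behind that cited result, but as written it is an unproven claim; also note a prose slip --- you say the no-collision probability is ``at most'' the uniform-case product, while your displayed inequality (correctly, and as needed) says $\geq$.

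More importantly, the ``delicate step'' you single out is a genuine gap, and the paper does not close it either: Cauchy--Schwarz gives $\sum_{s\in S}P(s)^2\geq\rho^2/N$, the wrong direction, so the chained bound $\kappa\leq\rho^2/N+(1-\rho)^2$ requires near-uniformity of the mass on $S$, and the paper simply stipulates this model without argument. The failure is not hypothetical. Take $P$ with half its mass uniform on $10^4$ atoms and half uniform on $10^{100}$ atoms: then $\beta=4\times10^4>1000$, and with $M=600$ samples one gets $\gamma\approx0.99$ with all side conditions satisfied; yet for $\rho=0.999$ the smallest mass-$\rho$ set has size about $10^{100}$, while the stated bound evaluates to roughly $4\times10^4$. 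So the theorem as stated is false without an additional hypothesis on $P$ (no heavy head, i.e., near-uniform mass on the head set), exactly the strengthening you anticipated would be needed. In short: your reconstruction matches the paper's argument step for step, and the obstacle you flagged is not a defect of your attempt relative to the paper --- it is a defect of the theorem and its published proof, which you, unlike the paper, make explicit.
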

\begin{proof}
Suppose $X1, X2, \ldots$ are i.i.d. samples from the discrete distribution $P$. We define $T=\inf\{t\geq2, X_t\in\{X1,X2,
\ldots,X_{t-1}\}\}$ to be the collision time and also we use $\beta=\frac{1}{\Pr[T=2]}=\frac{1}{\sum_{X\in\Omega}P(X)^2}$ as a surrogate for the uniformality of $P$. According Theorem 3 in~\cite{wiener2005bounds}, $Pr[T\geq M]$ can be upper-bouded using $\beta$. Specifically, with $\beta>1000$ and $M\leq 2\sqrt{\beta\ln\beta}$, which is usually true when $P$ is the distribution of a generative model of images,
\begin{align}
   \Pr[T\geq M]\geq\exp(-\frac{M^2}{2\beta}-\frac{M^3}{6\beta^2})
\end{align}

To estimate $\beta$, it follows that
\begin{align}
   \Pr[T\geq M]&=1-\Pr[T\leq M]=1-
   \Pr[\text{there is at least a collision among }M] \notag\\
   &=1-\gamma \geq \exp(-\frac{M^2}{2\beta}-\frac{M^3}{6\beta^2})
\end{align}
which immediately implies 
\begin{align}
   \beta\leq\frac{2M}{-3+\sqrt{9+\frac{24}{M}\ln\frac{1}{1-\gamma}}}=\beta^*
   \label{ineq:beta}
\end{align}
This gives us a upper-bound of the unifomrality of distribution $P$, which we can utilize. Let $S\subseteq\Omega$ be the smallest set with probability mass $\geq\rho$ and suppose it size is $N$. To estimate the largest possible $N$ such that inequality~\ref{ineq:beta} holds, we let
\begin{align}
\frac{1}{(\frac{\rho}{N})^2 N + (1-\rho)^2}\leq\beta^*   
\end{align}

from which we obtain

\begin{align}
N\leq\frac{2M\rho^2}{(-3+\sqrt{9+\frac{24}{M}\ln\frac{1}{1-\gamma}})-2M(1-\rho)^2}
\end{align}

\end{proof}

 \section{Birthday paradox test for GANs: Experimental details}

 In the GAN setting, the distribution is continuous, not discrete. When support size is infinite then  in a finite sample, we should not expect exact duplicate images where every pixel is identical. Thus {\em a priori} one imagines the birthday paradox test to completely not work. But surprisingly, it still works if we look for near-duplicates. Given a finite sample, we select the 20 closest pairs  according to some heuristic metric, thus obtaining a candidate pool of potential near-duplicates inspect. Then we visually identify if any of them would be considered duplicates by humans.
 Our test were done using two datasets, CelebA (faces) and CIFAR-10. 

CelebA~\cite{liu2015faceattributes} is a large-scale face attributes dataset with more than 200K celebrity images (of 11k distinct individuals), each with 40 attribute annotations. The images in this dataset cover large pose variations and background clutter. 
Images of faces seem a good testing ground for our birthday paradox test since we humans are especially attuned to minor differences in faces. 

The CIFAR-10 dataset~\cite{krizhevsky2009learning} consists of 60k 32x32 colour images in 10 classes (airplane, automobile, bird, cat, deer, dog, frog, horse, ship, truck), with 6k images per class. 
 
 For faces,  we found Euclidean distance in pixel space works well
 as a heuristic similarity measure,  probably because the samples are centered and aligned. For CIFAR-10, we pre-train a discriminative Convolutional Neural Net for the full classification problem, and use the top layer representation as an embedding of the image. Heuristic similarity is then measured as the Euclidean distance in the embedding space. Possibly these similarity measures are crude, but note that improving them can only {\em lower} our estimate of the support size of the distribution, since  a better similarity measure can only increase the number of duplicates found. Thus our estimates below should be considered as upper bounds on the support size of the distribution.
 
 We also report some preliminary and inconclusive results on the $3,033,042$ bedrooms images from the LSUN dataset~\cite{yu2015lsun}, and our models are trained on $64\times64$ center crops, which is standard for GANs' training on this dataset.
 
 {\em Note:} Some GANs (and also older methods such as variational autoencoders) implicitly or explicitly apply noise to the training and generated images. This seems useful if the goal is to compute a perplexity score, which involves the model being able to assign a nonzero probability to {\em every} image. Such noised images are usually very blurry and the birthday paradox test does not work well for them, primarily because the automated measure of similarity no longer works well. Even visually judging similarity of noised images is difficult. Thus our experiments work best with GANs that generate sharper, realistic images.

 \subsection{Results  on CelebA dataset}
 We tested the following methods, doing the birthday paradox test  with Euclidean distance in pixel space as the heuristic similarity measure.
 
  \begin{itemize}
 \item DCGAN ---unconditional, with JSD objective as described in~\cite{goodfellow2014generative} and~\cite{radford2015unsupervised}. 
 \item  MIX+ GAN  protocol introduced in~\cite{arora2017generalization}, specifically, MIX+DC-GAN with $3$ mixture components.
 \item Adversarily Learned Inference (ALI)~\cite{dumoulin2017adversarially} (or equivalently BiGANs~\cite{donahue2017adversarial}).\footnote{ALI is probabilistic version of BiGANs, but their architectures are equivalent. So we only tested ALI in our experiments.} 
   \end{itemize}
   
    We find that with probability $\geq50\%$, a batch of about $400$ samples contains at least one pair of duplicates for both DCGAN and MIX+DCGAN. Figure~$\ref{fig:similar_faces}$ give examples duplicates and their nearest neighbors samples (that we could fine) in training set. These results suggest that the support size of the distribution is less than $400^2\approx160000$, which is actually lower than the diversity of the training set, but this distribution is not just memorizing the training set.  
    ALI (or BiGANs) appear to be somewhat more diverse, in that collisions appear with $50\%$ probability only with a batch size of $1000$, implying a support size of a million. This is $5$x the training set, but still much smaller than the diversity one would expect among human faces\footnote{After all most of us know several thousand people, but the only doppelgangers among our acquaintances are twins.}. (For fair comparison, we set the discriminator of ALI (or BiGANs) to be roughly the same in size as that of the DCGAN model, since the results of Section~\ref{subsec:scaling} below suggests that the discriminator size has a strong effect on diversity of the learnt distribution.)    Nevertheless, these tests do support the suggestion in~\cite{dumoulin2017adversarially} and~\cite{donahue2017adversarial} that the bidirectional structure prevents some of the mode collapse observed in usual GANs.  
 
 \subsubsection{Diversity vs Discriminator Size}
 \label{subsec:scaling}
 The analysis of Arora et al~\cite{arora2017generalization} suggested that the support size could be as low as near-linear in the capacity of the discriminator; in other words, there is a near-equilibrium in which a distribution of such a small support could suffice to fool the best discriminator. So it is worth investigating whether training in real life allows generator nets to exploit this \textquotedblleft loophole\textquotedblright\ in the training that we now know is in principle available to them.
 
 While a comprehensive test is beyond the scope of this paper, we did a crude first test with a simplistic version of discriminator size (i.e., capacity). We built DCGANs with increasingly larger discriminators while fixing the other hyper-parameters. The discriminator used here is a 5-layer Convolutional Neural Network such that the number of output channels of each layer is $1\times,2\times,4\times,8\times\textit{dim}$ where $dim$ is  chosen to be $16,32,48,64,80,96,112,128$. Thus the discriminator size  should be proportional to $dim^2$. Fig~\ref{fig:diversity_vs_size} suggests that in this simple setup the diversity of the learnt distribution does indeed grow  near-linearly with the discriminator size. (Note the diversity is seen to plateau, possibly because one needs to change other parameters like depth to meaningfully add more capacity to the discriminator.) 
 \begin{figure}[h]
	\centering
	\includegraphics[width=6.4in]{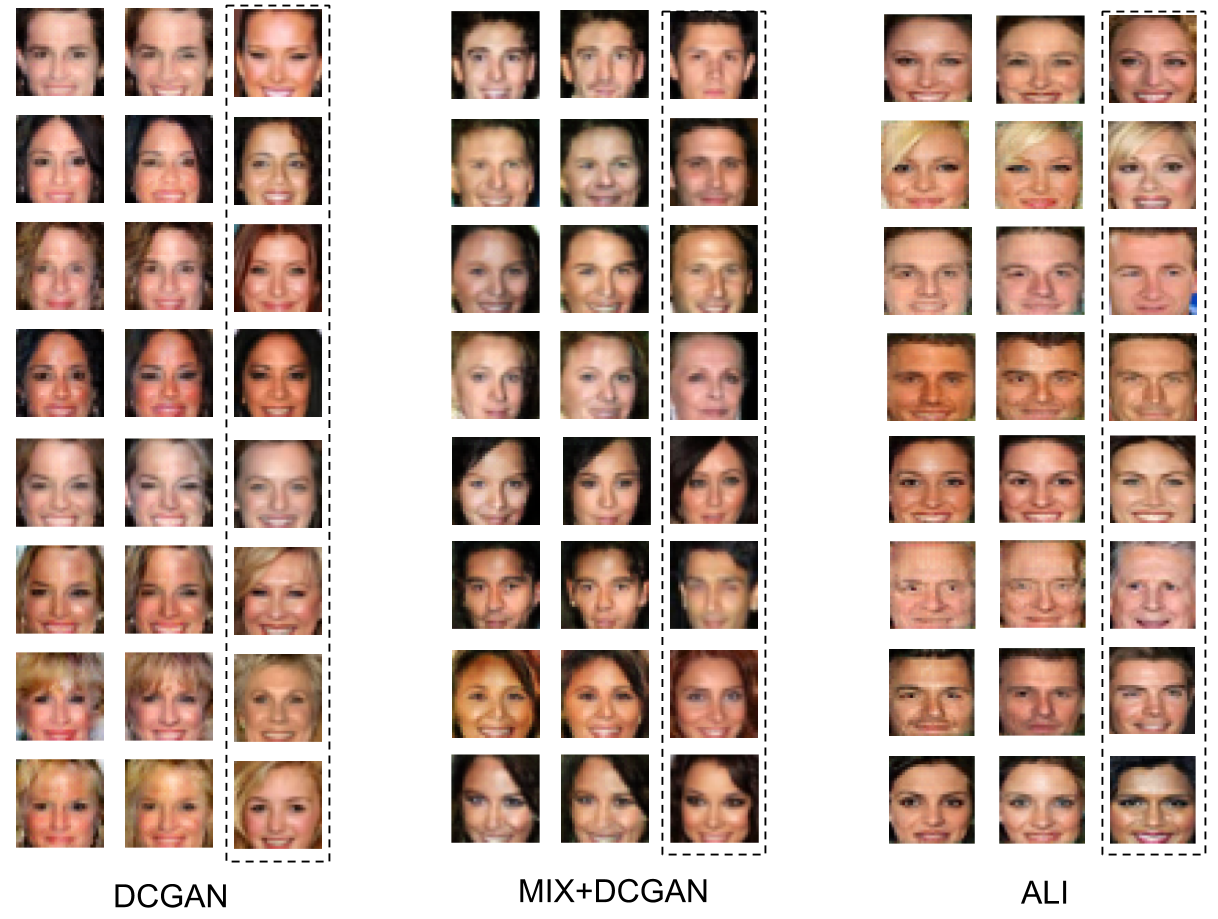}
	\caption{Most similar pairs found in batches of 640 generated faces samples from a DCGAN, a MIX+DCGAN (with 3 component) and an ALI. Each pair is from a different batch. Note that for the first two models, a smaller batch of 400 images is sufficient to detect duplicates with $\geq50\%$ probability. With 640 images, the probability increases to 90\%. Shown in dashed boxes are nearest neighbors in training data.}
	\label{fig:similar_faces}
\end{figure}

 \begin{figure}[h]
	\centering
	\includegraphics[width=6in]{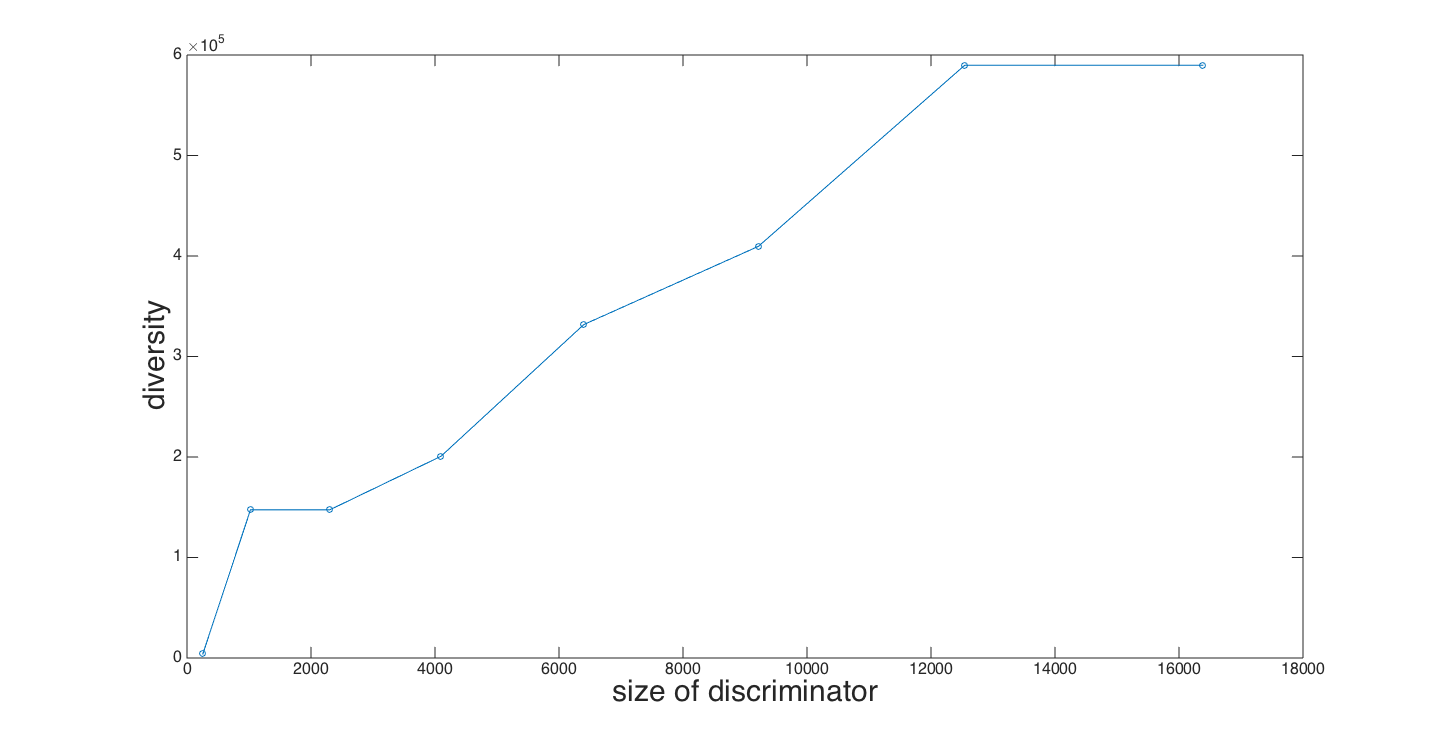}
	\caption{Diversity's dependence on discriminator size. The diversity is measured as the square of 
	the batch size needed to encounter collision w.p. $\geq$ 50\% v.s. size of discriminator. The discriminator size is explained in the main article.}
	\label{fig:diversity_vs_size}
\end{figure}

\subsection{Results for CIFAR-10}

On CIFAR-10, Euclidean distance in pixel space is not informative. So we adopt a classifying CNN with 3 convolutional layers, 2 fully-connected layer and a 10-class soft-max output pretrained with a multi-class classification objective, and use its top layer features as embeddings  for similarity test using Euclidean distance. 
We found, firstly, that the result of the test is affected by the quality of samples. 
If the training uses noised samples (with noise being added either explicitly or implicitly in the objective) then the generated samples are also quite noisy. Then the most similar samples in a batch tend to be blurry blobs of low quality. Indeed, when we test a DCGAN (even the best variant with 7.16 Inception Score~\cite{huang2016stacked}), the pairs returned are mostly blobs. To get meaningful test results, we turn to a Stacked GAN which is the best generative model on CIFAR-10 (Inception Score 8.59 ~\cite{huang2016stacked}). It also generates the most real-looking images. Since this model is trained by conditioning on class label, we measure its diversity within each class separately. The batch sizes needed for duplicates are shown in Table~$\ref{tab:cifar_diversity}$.  Duplicate samples as well as the nearest neighbor to the samples in training set are shown in Figure~$\ref{fig:similar_cifar}$.

\begin{table}[H]

\begin{center}
\begin{small}
\begin{tabular}{c|c|c|c|c|c|c|c|c|c}
\hline
%\abovespace
Aeroplane & Auto-Mobile & Bird & Cat & Deer & Dog & Frog & Horse & Ship & Truck\\
\hline
500 & 50 & 500 & 100 & 500 & 300 & 50 & 200 & 500 & 100\\
\hline
\end{tabular}
\caption{Class specific batch size needed to encounter duplicate samples with $>50\%$ probability, from a Stacked GAN trained on CIFAR-10}
\label{tab:cifar_diversity}
\end{small}
\end{center}
\end{table}

 \begin{figure}[h]
	\centering
	\includegraphics[width=6.5in]{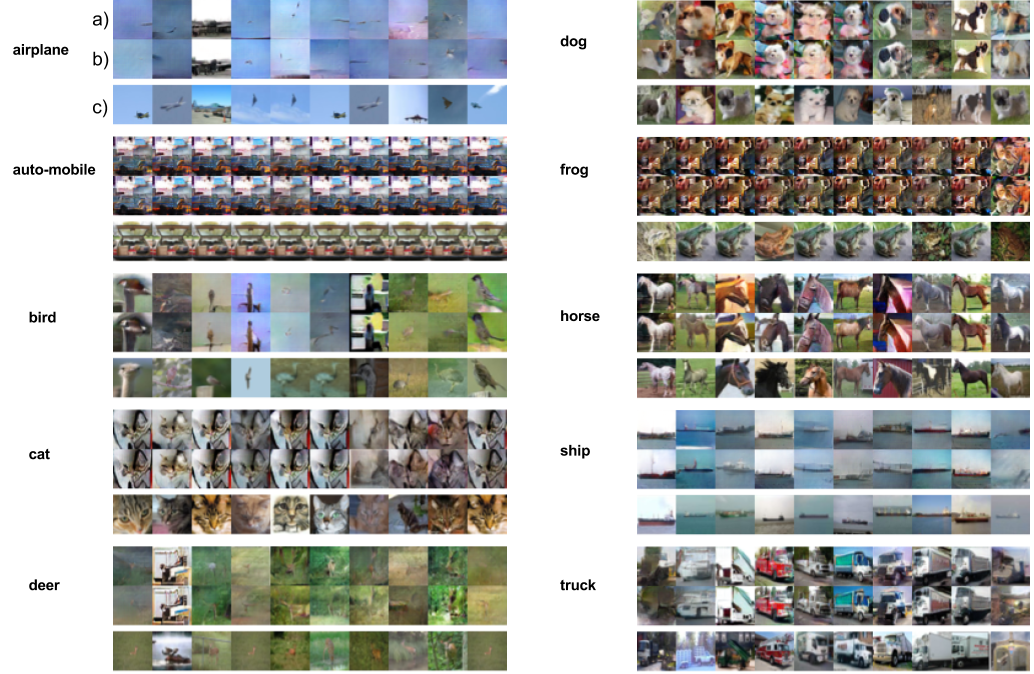}
	\caption{Duplicate pairs found in a batch of 1000 generated CIFAR-10 samples from a Stacked GAN.a)-b):pairs of duplicate samples; c): nearest neighbor of b) in the training set. The batch size 1000 was  not the batch size with 50\% duplicate detection probability. The number 1000 is chosen for the convenience of selecting pairs for visualization.}
	\label{fig:similar_cifar}
\end{figure}

 We also checked whether the duplicate image detected is close to any of the training images. To do so we looked for the nearest neighbor in the training set using our heuristic similarity measure and visually inspected the closest suspects. We find that the closest image is quite different from the duplicate detected, which suggests the issue with GANs is indeed lack of diversity (low support size) instead of memorizing training set.

\subsection{Exploratory results on Bedroom dataset}
We tested DCGANs trained on the Bedroom dataset(LSUN) using Euclidean distance to extract collision candidates since it is impossible to train a CNN classifier on such single-category (bedroom) dataset. We notice that the most similar pairs are likely to be the corrupted samples with the same noise pattern (top-5 collision candidates all contain such patterns). When ignoring the noisy pairs, the most similar "clean" pairs are not even similar according to human eyes. This implies that the distribution puts significant probability on noise patterns, which can be seen as a form of under-fitting  (also reported in the DCGAN paper). We manually counted the number of samples with a fixed noise pattern from a batch of 900 i.i.d samples. We find 43 such corrupted samples among the 900 generated images, which implies $43/900\approx5\%$ probability. 
%Note that the probability is quite large, considering the probability of encountering such a sample in a merely 100 samples batch is as large as $1-(1-0.05)^100\approx99.4\%$.

 \begin{figure}[h]
	\centering
	\includegraphics[width=3.5in]{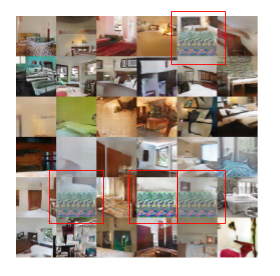}
	\caption{Randomly generated samples from a DCGAN trained on Bedroom dataset. Note that there are corrupted images with a fixed noise pattern (emphasized in red boxes). }
	\label{fig:bedroom_samples}
\end{figure}

\section{Birthday paradox test for VAEs}

 \begin{figure}[h]
	\centering
	\includegraphics[width=5in]{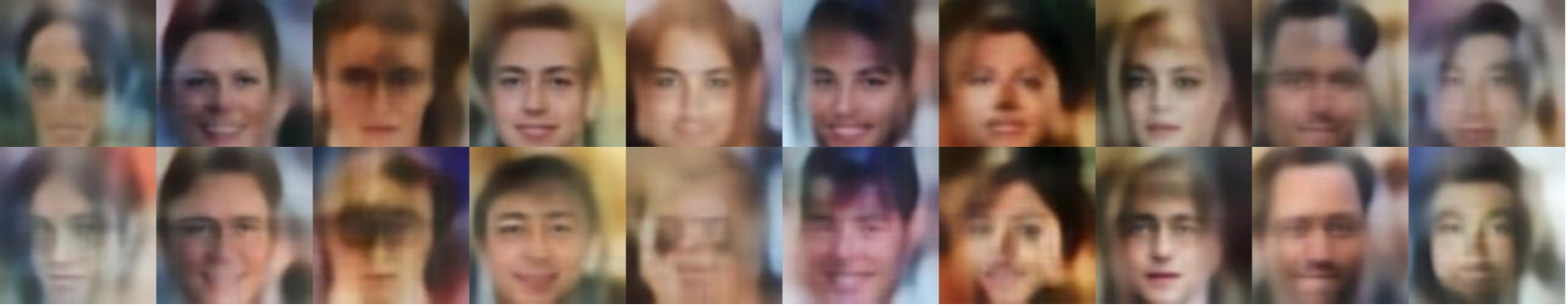}
	\caption{Collision candidates found in Variation Auto-Encoder samples. The duplicated samples are frequently blurry ones because the crucial features (eyes, hair, mouth) of VAE samples are not as distinctive as those of GANs'.}
	\label{fig:vae_collisions}
\end{figure}

Given these findings, it is natural to wonder about the diversity of distributions learned using earlier methods such as Variational Auto-Encoders~\cite{kingma2014auto} (VAEs). Instead of using feedback from the discriminator, these methods train the generator net using feedback from an approximate perplexity calculation. Thus the analysis of~\cite{arora2017generalization} does not apply as is to such methods and it is conceivable they exhibit higher diversity. However, we found the birthday paradox test difficult to run since  samples from a VAE trained on CelebA were not realistic or sharp enough for a human to definitively conclude whether or not two images were almost the same. Fig~\ref{fig:vae_collisions} shows examples of collision candidates found in batches of 400 samples; clearly some indicative parts (hair, eyes, mouth, etc.) are quite blurry in VAE samples.

\section{Conclusions}
We have introduced a new test based upon the Birthday Paradox for testing the diversity of images in a distribution.  Experiments using this test suggest that current GANs approaches, specifically, the ones that produce images of higher visual quality, fall significantly short of learning the target distribution, and in fact the support size of the generated distribution is rather low (mode collapse). The possibility of such a scenario was anticipated in a recent theoretical analysis of~\cite{arora2017generalization}, which showed that the GANs training objective is not capable of preventing mode collapse in the trained distribution. Our rough experiments also suggest ---again in line with the theoretical analysis---that the size of the distribution's support scales near-linearly with discriminator capacity; though this conclusion needs to be rechecked with more extensive experiments.
 
 %Our experiments also suggest ---again as predicted by the theoretical analysis---that the issue is not the trivial failure mode namely, the generator \textquotedblleft memorizes\textquotedblright\ the training samples. Instead the issue is \textquotedblleft mode collapse\textquotedblright\ since the support size is small. The 

 This combination of theory and empirics raises the open problem of how to change the GANs training objective so that it avoids such mode collapse. Possibly  ALI/BiGANs point to the right direction, since they exhibit somewhat better diversity than the other GANs approaches in our experiments.
 
 Finally, we should consider the possibility that the best use of GANs and related techniques could be feature learning or some other goal, as opposed to distribution learning. This needs further theoretical and empirical exploration.
 
 \section*{Acknowledgements}
 We thank Rong Ge and Cyril Zhang for helpful discussions and Ian Goodfellow for his comments on the manuscript. This research was supported by the National Science Foundation (NSF), Office of Naval Research (ONR), and the Simons Foundation.

\end{document}